\documentclass[conference]{milcom_style/IEEEtran}
\IEEEoverridecommandlockouts
\usepackage{cite}
\usepackage{amsmath,amssymb,amsfonts,amsthm}
\usepackage{algorithmic}
\usepackage{graphicx}
\usepackage{textcomp}
\usepackage{xcolor}
\usepackage{booktabs}
\usepackage[table]{xcolor}
\usepackage{multirow}
\usepackage{subcaption}
\usepackage{url}
\def\BibTeX{{\rm B\kern-.05em{\sc i\kern-.025em b}\kern-.08em
    T\kern-.1667em\lower.7ex\hbox{E}\kern-.125emX}}

\newtheorem{theorem}{Theorem}[section]

\usepackage{mathtools}
\usepackage{dsfont}
\usepackage[ruled,linesnumbered]{algorithm2e}
\SetKwComment{Comment}{/* }{ */}
\newlength\imagewidth
\newlength\imagescale

\newenvironment{nospaceflalign*}
 {\setlength{\abovedisplayskip}{0pt}\setlength{\belowdisplayskip}{0pt}%
  \csname flalign*\endcsname}
 {\csname endflalign*\endcsname\ignorespacesafterend}

\DeclareMathOperator*{\argmax}{arg\,max}

\usepackage{comment}

\begin{document}

\title{Robust and Explainable Divide-and-Conquer Learning for Intrusion Detection} 

\author{%
    \IEEEauthorblockN{%
        Yan Zhou\IEEEauthorrefmark{1}, 
        Kevin Hamlen\IEEEauthorrefmark{1}, 
        Michael De Lucia \IEEEauthorrefmark{2}, 
        Murat Kantarcioglu \IEEEauthorrefmark{3}, \\
        Latifur Khan \IEEEauthorrefmark{1},  
        Sharad Mehrotra \IEEEauthorrefmark{4}, 
        Ananthram Swami \IEEEauthorrefmark{2}, 
        Bhavani Thuraisingham\IEEEauthorrefmark{1}%
        }%
    \thanks{\IEEEauthorrefmark{1} University of Texas at Dallas, Richardson, TX, \{yan.zhou2, hamlen, lkhan, bxt043000\}@utdallas.edu}%
    \thanks{\IEEEauthorrefmark{2} DEVCOM, Army Research Laboratory, Adelphi, MD \{michael.j.delucia2.civ,ananthram.swami.civ\}@army.mil}%
    \thanks{\IEEEauthorrefmark{3} Department of Computer Science, Virginia Tech, Blacksburg, VA, muratk@vt.edu}%
    \thanks{\IEEEauthorrefmark{4} University of California, Irvine, sharad@ics.uci.edu}%
}%


\maketitle
\begin{abstract}
Machine learning-based intrusion detection requires complex models to capture patterns in high-dimensional, noisy, and class-imbalanced raw network traffic, yet deploying such models remains impractical on resource-constrained devices with limited processing power and memory. In this paper, we present a correlation-aware divide-and-conquer learning technique that decomposes a complex learning problem into smaller, more manageable subproblems. This enables lightweight models as simple as decision trees to be trained on focused subtasks, yielding up to 43.3\% higher local accuracy and up to 257  times reduction in model size on real-world network intrusion detection datasets, while also improving adversarial robustness and explainability. 
\end{abstract}
\begin{IEEEkeywords}
Intrusion detection systems, subproblem correlation-aware learning, resource-constrained learning, robustness, cost, explainability
\end{IEEEkeywords}

\section{Introduction}
\label{sec:intro}

Machine learning is widely used in network intrusion detection. The high-dimensional, noisy, and imbalanced nature of raw network traffic demands complex models for accurate detection. Such models are computationally intensive and impractical for resource-constrained  devices. Moreover, complex models are prone to overfitting, vulnerable to adversarial attacks~\cite{onmodcomplexity,ma2021understanding}, and hard for human experts to interpret~\cite{electronics11132082}. {\em This raises a natural question: can complex learning tasks be decomposed into simpler subtasks for which lightweight, effective, and robust models can be trained independently?}

Divide-and-conquer techniques have been developed to address large, complex learning problems by breaking them into smaller subproblems that can be solved independently~\cite{1031186,pmlr-v32-hsieha14, cascadesvm, pmlr-v202-ghosh23c}. However, existing approaches typically optimize for a single objective such as accuracy or efficiency, while real-world applications also demand robustness and explainability where adversarial attacks are expected and human oversight is essential~\cite{10.3233JCS-210094,electronics11132082}. In this paper, we present SCAL (Subproblem Correlation-Aware Learning), a novel divide-and-conquer technique that jointly optimizes accuracy, cost, robustness, and explainability, along with a holistic comparison between a single model trained globally and smaller models trained locally in a divide-and-conquer fashion.

SCAL automatically partitions a learning problem into smaller subproblems by grouping correlated traffic types based on network traffic fingerprints. Unlike existing techniques~\cite{Jacobs1991Adaptive, NIPS1991_59b90e10}, SCAL falls back to a single learning problem  when decomposition offers no benefit. SCAL is organized as a three-tier hierarchy: a subproblem generator at the root groups highly correlated classes into independent subproblems, for example, VPN and non-VPN SSH traffic are grouped together, separate from unencrypted streaming traffic. An instance distributor in the middle routes incoming traffic to the appropriate subproblem, and independent local classifiers at the leaves are trained to solve subproblems in parallel. This design yields local models that are more accurate, compact, robust, and interpretable than a single global model.

The main contributions of this work are:
\begin{itemize}
\itemsep=0pt
    \item a divide-and-conquer learning technique that automatically decomposes a complex learning problem into lower-complexity subproblems whenever beneficial;
    \item a holistic evaluation comparing global and local models across four objectives: accuracy, cost, robustness, and explainability;
    \item an extensive case study on payload-based network intrusion detection, a domain characterized by severe class imbalance and strict resource constraints.
\end{itemize}

The rest of this paper is organized as follows. Section~\ref{sec:related} reviews related work. Section~\ref{sec:method} describes the SCAL technique in detail. Section~\ref{sec:exp} evaluates SCAL in terms of accuracy, cost, robustness, and explainability on network payload datasets. Finally, Section~\ref{sec:conclude} concludes our work and discusses limitations.

\section{Related Work}
\label{sec:related} 

The divide-and-conquer principle underlies many modular learning approaches, including Mixture of Experts~\cite{Jacobs1991Adaptive,NIPS1991_59b90e10,716791}, ensemble methods~\cite{boosting1996,breiman96}, and Random Forests~\cite{breiman2001random}, all of which partition the input space across specialized models that are combined into a final hypothesis~\cite{1368565}. However, these techniques focus primarily on improving overall accuracy and do not explicitly account for subproblem characteristics such as local concepts, discriminatory capacity, or class correlations.

Modular learning has also been explored by partitioning the output space. Binary Hierarchical Classifiers(BHC)~\cite{kumar2002} recursively group classes into two subgroups, decomposing a $C$-class problem into $C-1$ binary subproblems using techniques such as simulated annealing~\cite{1999SPIE3722K} or max-cut~\cite{1368565}. While effective for large output spaces, these methods indiscriminately impose a binary hierarchy on all problems, regardless of whether such a structure naturally fits the data.

Hierarchical Classification~\cite {hcsurvey2011}, widely used in text mining~\cite{1031186,1390720}, organizes classification problems into a tree of meta-classes, either defined by a pre-established taxonomy~\cite{657461,2835898} or automatically constructed by grouping similar classes~\cite{1062843}. Like other modular learning techniques, its primary goal is to improve classification accuracy.
 
Divide-and-conquer techniques have also targeted efficiency and explainability. DC-SVM~\cite{pmlr-v32-hsieha14} partitions kernel SVM problems into smaller subproblems via kernel clustering, achieving much faster training than standard SVMs. Mixture of Interpretable Experts (MoIE)~\cite{pmlr-v202-ghosh23c} routes subsets of samples through interpretable models to explain local concepts.

Unlike existing work that optimizes for a single objective, we comprehensively evaluate our SCAL technique across four dimensions: \textit{accuracy, cost, robustness, and explainability} in the context of network intrusion detection under adversarial conditions and resource constraints.

\section{Methodology}
\label{sec:method}

We first establish the theoretical conditions for divide-and-conquer learning, then present SCAL, which automatically decomposes a learning problem into subproblems by grouping highly correlated classes.  As an example, in network intrusion detection, DoS attack variants such as Hulk, GoldenEye, and Slowloris all exhaust server resources and are highly correlated, making them natural candidates for grouping into a single subproblem. While designed for network payload data, the approach generalizes to other domains.

\subsection{Subtask Correlation and Accuracy}
\label{sec:subtask}
For simplicity, we consider a learning problem with  two learning subtasks. Let $X \in \mathbb{R}^d$ and $Y \in \{0,1,2\}$. We compare the global learning task $f: X \rightarrow Y\in \{0,1,2\}$ with the composite $\hat{f}$ of two subtasks $f_1: X \rightarrow Y_1 \in \{0,1\}$ and $f_2: X \rightarrow Y_2 \in \{0,2\}$. In essence, the composite $\hat{f}:X \rightarrow Y$ is:
\[
\hat{f}(X) = \begin{cases}
    i, & \text{if } f_i(X) > 0 \land f_j(X) = 0 \\
    0, & \text{if } f_i(X) = f_j(X) = 0 \\
    \text{ND}, & \text{if } f_i(X) \cdot f_j(X) > 0
\end{cases}
\]
where $i,j\in\{1,2\}$ and ND denotes ``not defined''. Note that the problem under consideration can be easily extended to multiple learning subtasks.  
\begin{theorem}
\label{the:correlation}
If two learning subtasks $f_1$ and $f_2$ are positively correlated, learning a single task  $f$ yields lower error than learning $f_1$ and $f_2$ separately to construct the composite classifier $\hat{f}$. 
\end{theorem}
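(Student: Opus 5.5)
We will make the informal statement precise and then compare the errors by counting the events on which each approach fails.

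\textbf{Formalization.} Define the subtask indicators $Z_1=\mathds{1}[f_1(X)>0]$ and $Z_2=\mathds{1}[f_2(X)>0]$. ``Positively correlated'' will mean $\mathrm{Cov}(Z_1,Z_2)>0$, which is equivalent to
\[
P(Z_1=1,Z_2=1) > P(Z_1=1)P(Z_2=1).
\]
The true labels are mutually exclusive, so at most one of the events $Y=1$ and $Y=2$ holds. The composite $\hat f$ therefore errs in two ways. It errs whenever the pair $(Z_1,Z_2)$ disagrees with the pair $(\mathds{1}[Y=1],\mathds{1}[Y=2])$ in any coordinate. It also errs on the ND event $\{Z_1=Z_2=1\}$, which is wrong for every $Y$.

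\textbf{Decomposing the composite error.} We bound the composite error from below by
\[
P(\hat f\neq Y)\ \ge\ P(Z_1=Z_2=1)\;+\;P\bigl(\text{exactly one of }Z_1,Z_2\text{ is wrong}\bigr).
\]
Positive correlation pushes mass onto the ND event. Let $p_i=P(Z_i=1)$. Each subtask is trained to be accurate, so $p_i\approx P(Y=i)$. Then
\[
P(Z_1=Z_2=1)=p_1p_2+\mathrm{Cov}(Z_1,Z_2).
\]
This is strictly larger than under independence, and it is an irreducible error term that the global task $f$ never incurs, because $f$ outputs a single label.

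\textbf{Comparison with the global learner.} We use the fact that $f$ is chosen from a hypothesis class that contains every composite map. Any pair $(f_1,f_2)$ can be turned into a single $3$-class classifier $g$ as follows: on the non-ND region set $g=\hat f$, and on the ND region let $g$ output whichever of $1$ and $2$ has the larger posterior. Then
\[
\mathrm{err}(g)\le \mathrm{err}(\hat f)-P(Z_1=Z_2=1)\cdot\tfrac12,
\]
since on ND at most one of the two labels can be correct, and choosing the label with the larger posterior is correct with probability at least half of the ND mass conditioned on $Y\in\{1,2\}$. We will also check the case where ND mass falls on $Y=0$; there $g$ strictly improves whenever ND has positive mass with $Y\ne0$. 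Because the optimal $f$ satisfies $\mathrm{err}(f)\le\mathrm{err}(g)$, we get $\mathrm{err}(f)<\mathrm{err}(\hat f)$ as soon as $P(Z_1=Z_2=1)>0$, and positive correlation guarantees that this probability is strictly positive.

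\textbf{Main obstacle.} The hard step is making ``positively correlated'' imply strictly positive ND mass that lands on $Y\neq 0$, without extra distributional assumptions. The plan is to argue from $p_1,p_2>0$ together with $\mathrm{Cov}>0$, and to invoke the mutual exclusivity of the labels to show that the ND region is charged to the error of $\hat f$. If a strict inequality under fully general conditions turns out to be out of reach, we will state the result as ``$\mathrm{err}(f)\le\mathrm{err}(\hat f)$, with a gap that grows with $\mathrm{Cov}(Z_1,Z_2)$.''
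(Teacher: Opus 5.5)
Your route is genuinely different from the paper's. The paper computes $\rho_{1,2}$ and reads off $Pr(X\in\mathcal{C})=p>p_1p_2$. It then splits the excess risk over the conflict and non-conflict regions and assumes (both $f$ and $\hat f$ near-Bayes given enough data) that the non-conflict contributions cancel. That leaves the approximate excess risk $Pr(\hat f(X)\neq Y\mid X\in\mathcal{C})\cdot Pr(X\in\mathcal{C})$. Your dominance argument repairs $\hat f$ on the ND region to get a genuine three-class classifier $g$, then uses $\mathrm{err}(f)\le\mathrm{err}(g)$. This replaces the paper's approximation by an exact inequality and needs no assumption on how good $f_1,f_2$ are off the conflict region. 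It needs only that the global class contain $g$. Since $g$ is not itself a composite map, state this as a richness assumption (e.g.\ $f$ Bayes-optimal, which is the paper's own standing assumption).

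As written, however, the argument does not close, because two displayed inequalities are false.

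First, $P(\hat f\neq Y)\ge P(Z_1=Z_2=1)+P(\text{exactly one of }Z_1,Z_2\text{ wrong})$ counts twice the ND event with $Y\in\{1,2\}$, on which exactly one subtask is wrong. Take $Y=1$, $Z_1=Z_2=1$ with probability $\tfrac12$ and otherwise $Y=0$, $Z_1=Z_2=0$. Then $\mathrm{Cov}(Z_1,Z_2)=\tfrac14>0$ and $P(\hat f\neq Y)=\tfrac12$, while your right-hand side equals $1$. You never use this bound, so simply drop it.

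Second, and this one you do use: write $\eta_y(X)=P(Y=y\mid X)$. Relabeling ND by the larger of $\eta_1(X),\eta_2(X)$ gains only $E[\max(\eta_1,\eta_2)\mathds{1}\{Z_1=Z_2=1\}]\ge\tfrac12P(Z_1=Z_2=1,\,Y\neq0)$, not $\tfrac12P(Z_1=Z_2=1)$. Your plan to show from $p_1,p_2>0$ and $\mathrm{Cov}>0$ that ND mass lands on $Y\neq0$ cannot succeed. The covariance constrains only the joint law of the predictions and says nothing about where $Y$ sits. Take $Y=0$, $Z_1=Z_2=1$ with probability $\tfrac12$ and otherwise $Y=0$, $Z_1=Z_2=0$. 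Then $\mathrm{Cov}=\tfrac14$, yet your $g$ has exactly the error of $\hat f$.

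The obstacle is self-inflicted: on ND, let $g$ output $\arg\max_{y\in\{0,1,2\}}\eta_y(X)$. The ND event is a function of $X$, the output ND is never correct, and $\eta_0+\eta_1+\eta_2=1$, so
\begin{align*}
\mathrm{err}(f)\le\mathrm{err}(g)&=\mathrm{err}(\hat f)-E\bigl[\max_y\eta_y(X)\,\mathds{1}\{Z_1=Z_2=1\}\bigr]\\
&\le\mathrm{err}(\hat f)-\tfrac13\bigl(p_1p_2+\mathrm{Cov}(Z_1,Z_2)\bigr).
\end{align*}
This is strictly below $\mathrm{err}(\hat f)$ because $\mathrm{Cov}(Z_1,Z_2)>0$ forces $P(Z_1=Z_2=1)>p_1p_2\ge0$. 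So no retreat to ``$\le$'' is needed. You get a rigorous gap that grows with the covariance at fixed marginals, the exact counterpart of the paper's approximate excess risk. As in the paper, it also becomes visible that the operative hypothesis is $Pr(X\in\mathcal{C})>0$; positive correlation is merely one sufficient condition for it.
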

\begin{proof}
The correlation coefficient between the two learning subtasks $f_1$ and $f_2$ is:
\begin{align}
\rho_{1,2} & = \frac{E(\mathds{1}_1\mathds{1}_2)-E(\mathds{1}_1)E(\mathds{1}_2)}{\sqrt{Var(\mathds{1}_1)Var(\mathds{1}_2)}}
\end{align}
where $\mathds{1}_{i \in \{1,2\}}$ is the indicator random variable of the event $f_i(X)=i$, and  $\mathds{1}_1\mathds{1}_2$ is the indicator variable of the event in the conflict region $\mathcal{C} := \{ X \in \mathbb{R}^d \mid f_1(X) = 1  \land  f_2(X) = 2 \}$ in which $\hat{f}(X)$ predicting both 1 and 2 is ambiguous. Thus,  
\begin{align}
E(\mathds{1}_1\mathds{1}_2)& = p = Pr((f_1(X) = 1) \land (f_2(X) = 2))\\
E(\mathds{1}_1) & = p_1 = Pr(f_1(X) = 1) \\
E(\mathds{1}_2) & = p_2 = Pr(f_2(X) = 2) 
\end{align}
Therefore,
\[
\rho_{1,2} = \frac{p-p_1p_2}{\sqrt{p_1(1-p_1)p_2(1-p_2)}}
\]
When two subtasks are highly positively correlated, that is, $\rho_{1,2} > 0$, the probability of the conflict region $Pr(X\in\mathcal{C})$ increases (with respect to them being independent) as $p > p_1p_2$: 
\begin{align*}
Pr(X\in\mathcal{C}) & = Pr((f_1(X) = i) \land (f_2(X) = j))    \\
 & = Pr(f_i(X) = i| f_j(X) = j) \cdot Pr(f_j(X) = j )    \\
&  > Pr(f_i(X) = i) \cdot Pr (f_j(X) = j) 
\end{align*}
where $i \neq j$ and $i, j \in \{1,2\}$. 

Given $(X,Y)$, $\hat{f}(X) \neq Y$ when $\hat{f}$ misclassifies $X$.  The excess risk incurred by learning subtasks $f_1$ and $f_2$ separately over the conflict region $\mathcal{C}$ and the non-conflict region $\mathcal{C}' := \{ X \in \mathbb{R}^d \mid f_1(X) \neq  f_2(X) \}$, relative to learning the joint classifier $f$, is:
\begin{align*}
R (\hat{f}, f) & = \mathbb{E}[\mathds{1}_{ \hat{f}(X) \ne Y }] - \mathbb{E}[\mathds{1}_{{f}(X) \ne Y }] \\
& = \mathbb{E}[\mathds{1}_{ \hat{f}(X) \ne Y|X\in \mathcal{C}' }] + \mathbb{E}[\mathds{1}_{ \hat{f}(X) \ne Y|X\in \mathcal{C} }] \\
& \;\;\;\;\;- \mathbb{E}[\mathds{1}_{{f}(X) \ne Y }] \\
& \approx Pr(\hat{f}(X) \ne Y \mid X \in \mathcal{C}) \cdot  Pr(X \in \mathcal{C})
\end{align*}
as $f$ does not suffer from the ambiguity in $\mathcal{C}$, unlike $\hat{f}$. Note that $\mathbb{E}[\mathds{1}_{ \hat{f}(X) \ne Y|X\in \mathcal{C}' }] \approx \mathbb{E}[\mathds{1}_{{f}(X) \ne Y }]$ is based on the assumption that both $f$ and $\hat{f}$ approach Bayes classifier with sufficient training data~\cite{ishida2023is}. The difference in classification error is attributed to the conflict region. Thus, learning the composite $\hat{f}$ of $f_1$ and $f_2$ is suboptimal when $Pr(X\in\mathcal{C})$ is non-trivial.
\end{proof}

\subsection{Subproblem Decomposition}
We seek a partition into $k$ subproblems that minimizes intra-subproblem learning loss while maximizing inter-subproblem dissimilarity:

\begin{align*}
   & \min_{\{X_1, \ldots, X_k\} \subset X} \sum_{X_i\subset X}Pr(f_i(X_i)\neq y_i|X_i)\cdot Pr(X_i)  \\
    s.t. \;\;\; & 
    X_i \in \{\argmax_{X_{i \in [1,k]} \subset X} D(X_1, \ldots, X_k)\} \\
   & X = \bigcup_{i=1}^{k} X_i, X_i \cap X_j = \emptyset\; for\;i \neq j
\end{align*}

where $f_i$ is trained on subproblem $(X_i, y_i)$, and $D(X_1, \ldots, X_k)$ measures inter-subproblem dissimilarity via pairwise class correlations across subproblems. The number of subproblems $k$ can be fixed or determined via hyperparameter search during offline training.

To compute class correlations, we assign each instance a fingerprint---a vector of code lengths obtained by compressing the payload under each class-specific compression model: $\tilde{x} = [\tilde{x}_1, \ldots, \tilde{x}_{|C|}]$. Figure~\ref{fig:fingerprint} illustrates this process.

\begin{figure}[!htb]
    \centering
    \includegraphics[clip,trim=0cm 0cm 0cm 0cm, width=0.4\textwidth]{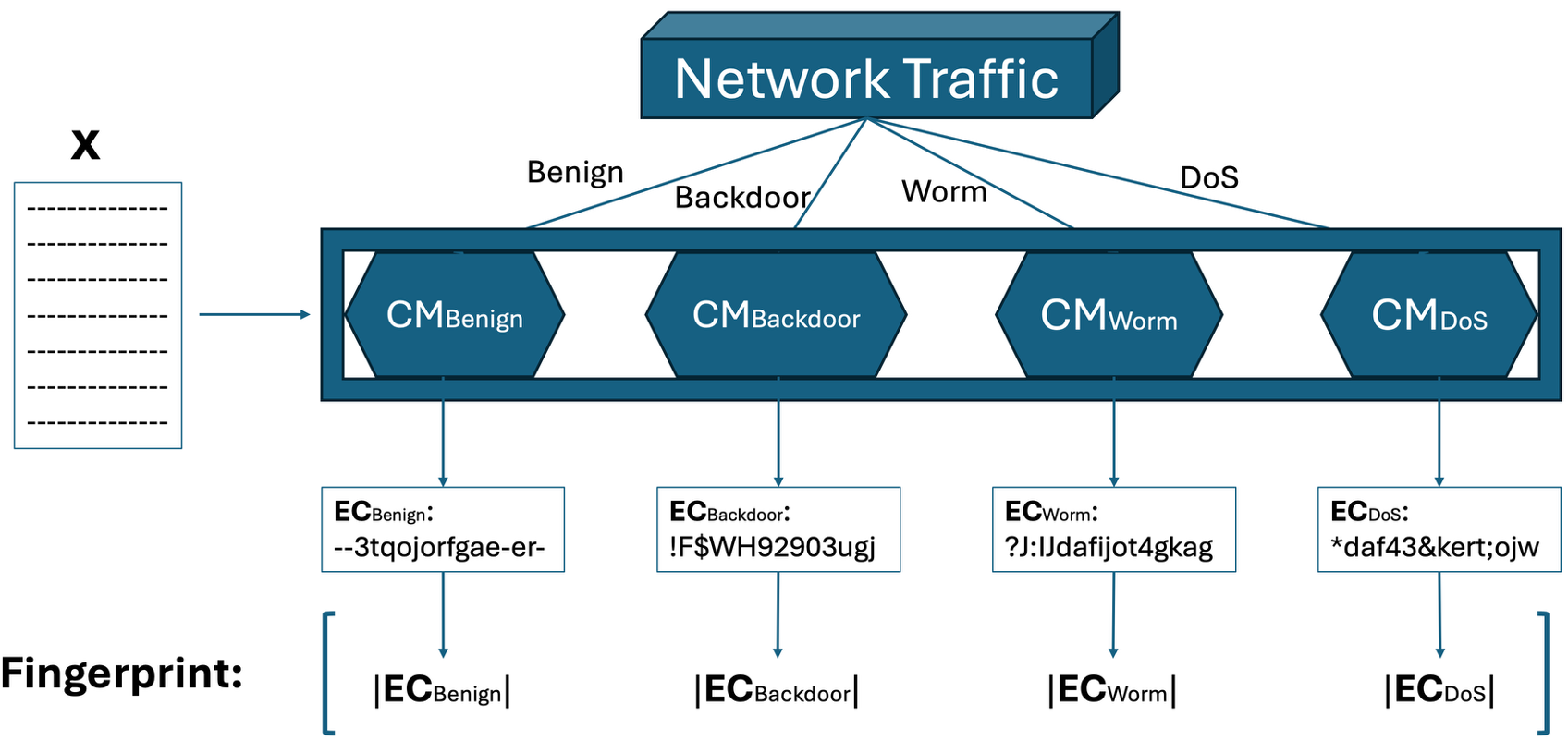}
    \caption{\label{fig:fingerprint} Fingerprint calculation for a given payload $x$.}
\end{figure}

From the fingerprint profile $\widetilde{X}$ of the training data, we compute a $|C|\times |C|$ correlation matrix $R$, where each entry captures the correlation between two classes. Highly correlated classes are grouped into the same subproblem to facilitate learning of fine-grained local distinctions. 

\subsection{Subproblem Correlation-Aware Learning: SCAL}

Given $|C|$ classes, SCAL first builds $|C|$ compression models to compute instance fingerprints, then derives a class  correlation matrix (Lines 1--5 in Algorithm~\ref{alg:cascade}). Hierarchical clustering on this matrix generates the subproblem partition (Line 6). An instance distributor is trained to route new instances to the appropriate subproblem (Line 7), and a local classifier is trained independently for each subproblem (Lines 8--11). At inference, each instance is first routed by the distributor, then classified by the corresponding local classifier. Instances routed to a single-class subproblem are classified directly without a local classifier.

\RestyleAlgo{ruled}
\begin{algorithm}
\caption{\label{alg:cascade} SCAL}
\DontPrintSemicolon
\KwData{X, y \Comment*[r]{input data and label}}
\KwResult{$C_s$, $C_{[1,k]}$ \Comment*[r]{local classifiers}}

\For{$i \gets 1$ \KwTo $|C|$ }{
$CM_{i} \gets compress(X_i,y_i)$ \Comment*[r]{Compression model for class $i$}
 }  
$\widetilde{X} \gets generate\_fingerprints(CM_{[1,\ldots,|C|]}, X)$\\
$Corr \gets R(\widetilde{X})$ \Comment*[r]{corr. coefficient}
$SubProb_{[1,\ldots,k]} \gets generate\_subproblem(Corr,k)$\\
$C_s \gets train\_instance\_distributor(X, SubProb)$\\
\For{$i \gets 1$ \KwTo $k$ }{
$X_i, y_i \gets make\_subproblem\_instances(C_s, X, y)$ \\
$C_{i} \gets train\_subclassifier(X_i,y_i)$
 }  
\end{algorithm}

\section{Experimental Evaluation}
\label{sec:exp}
We compare a single global classifier against local classifiers across four dimensions: accuracy, cost, robustness, and explainability. Results show that local classifiers achieve higher local accuracy, lower model complexity, stronger adversarial resilience, and greater interpretability. All experiments were run on an Intel Xeon W-2135 3.70GHz CPU with 250GB RAM, repeated 10 times with averaged results and standard deviations reported.

\subsection{Datasets and Models}
We evaluate on four real-world network intrusion detection datasets, focusing on the more challenging payload-based version of each. Payloads were extracted and represented as 1500-byte sequences, zero-padded as needed, and combined with four packet features: time to live, total length, protocol, and time duration. Source/destination IP and port information was excluded to prevent models from learning spurious associations.

\begin{itemize}
\itemsep=0pt
\item \textbf{UNSW-NB15}~\cite{unsw}: 79,881 instances (59,910 train / 19,971 test), 10 classes. Four of nine malicious classes each account for less than 2\% of traffic.
\item \textbf{CIC-IDS2017}~\cite{DBLP:conf/icissp/SharafaldinLG18}: 1,410,255 instances (1,057,691 train / 352,564 test), 15 classes. Six malicious classes each represent at most 1\% of traffic.
\item \textbf{IoT}~\cite{qacj-3x32-23}: 631,486 instances (473,614 train / 157,872 test), 10 classes. Over 95\% of traffic is benign, making this the most imbalanced dataset.
\item \textbf{VNAT}~\cite{vnat}: 179,996 instances (159,996 train / 20,000 test), 20 classes of encrypted and unencrypted application traffic, each containing exactly 8,000 instances.
\end{itemize}

We compare four global classifiers XGBoost, Random Forest, Logistic Regression, and Deep Neural Network against three divide-and-conquer techniques: Mixture of Experts~\cite{saves2024smt}, Binary Hierarchical Classifier (BHC)~\cite{kumar2002}, and SCAL. Local classifiers in SCAL and BHC are decision trees, chosen for their simplicity and interpretability. As an ablation, we also evaluate pseudo-SCAL, a variant of SCAL with randomly generated subproblems, to assess the impact of correlation-aware partitioning. Level-2 {\em zstd} compression algorithm~\footnote{\url{https://github.com/facebook/zstd}} was chosen to build class-specific compression models for its lossless compression, favorable compression ratios, and real-time performance. 

\subsection{Accuracy}
Our first experiment investigates whether divide-and-conquer techniques improve accuracy on local concepts using simple models such as decision trees, particularly important in imbalanced datasets where majority-class prediction can yield misleadingly high overall accuracy (e.g., predicting Benign for all instances achieves over 95\% accuracy on the IoT dataset). Model calibration~\cite{Kuhn_13} was applied to align predicted class probabilities with the true data distribution, using cross-validation with the number of folds constrained by the smallest class size. F1 scores are reported due to class imbalance.

\begin{table*}[!htb]
\caption{\label{tab:f1} F1 scores of LR, DNN, RF, XGB, MoE, HC, Pseudo-SCAL, and SCAL on the four datasets.}
\scriptsize
\begin{tabular}{|
>{\columncolor[HTML]{D4D4D4}}c |c |c |c |c |c|c|c|c|c|c|c|c|c|c|}
\hline
\cellcolor[HTML]{B0B3B2} & \cellcolor[HTML]{B0B3B2}\textbf{LR} & \cellcolor[HTML]{B0B3B2}\textbf{DNN} & \cellcolor[HTML]{B0B3B2}\textbf{RF} & \cellcolor[HTML]{B0B3B2}\textbf{XGB} & \cellcolor[HTML]{B0B3B2}\textbf{MoE} & \cellcolor[HTML]{B0B3B2}\textbf{HC} & \cellcolor[HTML]{B0B3B2}\textbf{Pseudo-SCAL} & \cellcolor[HTML]{B0B3B2}\textbf{SCAL}\\ \hline
\textbf{UNSW}            & $0.429 \pm 0.002$ & $0.363 \pm 0.042$ & $0.693 \pm 0.011$            & $0.689 \pm 0.004$      & $0.373 \pm 0.023$           & $0.668 \pm 0.004$ & $0.669 \pm 0.020$ & $\mathbf{0.713 \pm 0.008}$\\ \hline
\textbf{CIC}             & $0.606 \pm 0.005$ & $0.520 \pm 0.015$ & $0.701 \pm 0.030$            & $0.691 \pm 0.001$      & $0.280 \pm 0.036$           & $0.707 \pm 0.006$ & $0.725 \pm 0.026$ & $\mathbf{0.735 \pm 0.003}$\\ \hline
\textbf{IoT}             & $0.462 \pm 0.010$ & $0.424 \pm 0.045$ & $0.859 \pm 0.007$            & $0.778 \pm 0.005$      & $0.387 \pm 0.002$           & $0.747 \pm 0.012$ & $0.849 \pm 0.006$ & $\mathbf{0.860 \pm 0.004}$\\ \hline
\textbf{VNAT}            & $0.420 \pm 0.006$ & $0.682 \pm 0.031$ & $0.730 \pm 0.003$            & $\mathbf{0.774 \pm 0.004}$     & $0.179 \pm 0.022$            & $0.743 \pm 0.001$ & $0.733 \pm 0.011$ & $0.770 \pm 0.001$\\ \hline
\end{tabular}
\end{table*}

Table~\ref{tab:f1} reports F1 scores for all classifiers across the four datasets. Key findings are: (a.) No subproblems are generated for IoT, confirming the ``no harm'' principle of SCAL. (b.) SCAL achieved the best results on the three imbalanced datasets. XGBoost was marginally better on the balanced VNAT dataset but performed significantly worse on imbalanced ones. (c.) Random forest was the best global classifier overall, while pseudo-SCAL ranked second among divide-and-conquer methods. (d.) SCAL, random forest, BHC, and XGBoost all substantially outperformed mixture of experts, logistic regression, and DNN.

It is worth noting that on UNSW-NB15, what appears to be a modest 2\% gain in overall F1 by SCAL  translates to substantial local improvements of 39.8\% for \textit{DoS} and 43.3\% for \textit{analysis}, with no degradation in \textit{Benign} accuracy. This disproportionate local improvement was observed across the other datasets. Also note that pseudo-SCAL with random partitions underperformed SCAL and in some cases even the global models, confirming that correlation-aware partitioning is essential for accuracy gains.

\subsection{Cost}
\label{sec:cost}
Our second experiment compares the cost of a global random forest classifier against local decision trees in SCAL, as shown in Table~\ref{tab:cost}. Random forest was selected as the best global classifier. SCAL significantly reduced both model size and training time across all datasets, except on VNAT where both models spent considerable time attempting to fit the encrypted traffic without success.

\begin{table}[ht]
\centering
\caption{Model size (MB) and training time (sec) of Random Forest (RF) and SCAL.}
\label{tab:cost}
\begin{tabular}{llrr}
\toprule
\textbf{Dataset} & \textbf{Model} & \textbf{Size (MB)} & \textbf{Time (sec)} \\
\midrule
\multirow{2}{*}{UNSW} 
  & RF & $85.623 \pm 0.266$  & $5.038 \pm 0.456$ \\
  & SCAL          & $2.539 \pm 0.009$   & $1.568 \pm 0.159$ \\
\midrule
\multirow{2}{*}{CIC}
  & RF & $5258.433 \pm 478.481$ & $1159.680 \pm 151.779$ \\
  & SCAL          & $125.006 \pm 12.842$   & $839.885 \pm 112.744$ \\
\midrule
\multirow{2}{*}{IoT}
  & RF  & $46.554 \pm 4.305$  & $184.097 \pm 19.653$ \\
  & SCAL          & $0.181 \pm 0.012$   & $80.420 \pm 7.949$ \\
\midrule
\multirow{2}{*}{VNAT}
  & RF & $1236.456 \pm 142.989$ & $115.884 \pm 13.513$ \\
  & SCAL          & $15.048 \pm 1.670$     & $112.309 \pm 12.565$ \\
\bottomrule
\end{tabular}
\end{table}

\begin{figure}[!htb]
    \centering
    \subcaptionbox{\label{fig:unsw-comp} UNSW F1/complexity}{
        \includegraphics[clip,trim=0.2cm 0cm 0.2cm 0cm, width=0.2325\textwidth]{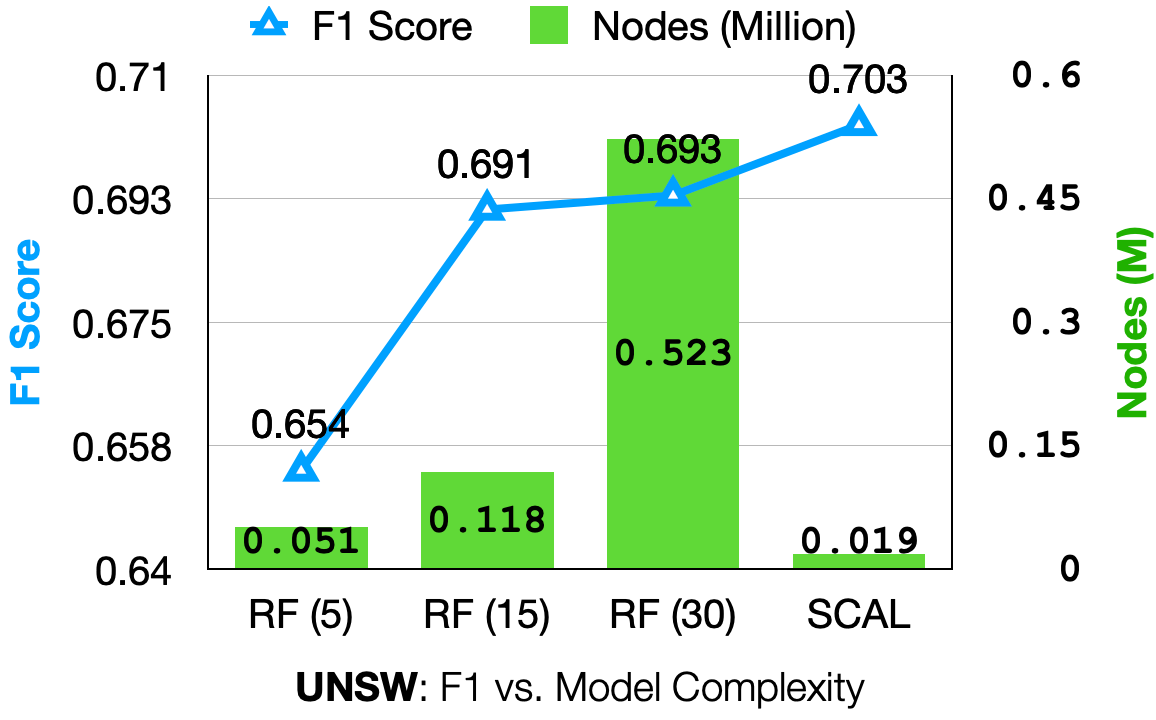}}
    \subcaptionbox{\label{fig:cic-comp} CIC F1/Complexity}{
        \includegraphics[clip,trim=0.2cm 0cm 0.2cm 0cm, width=0.2325\textwidth]{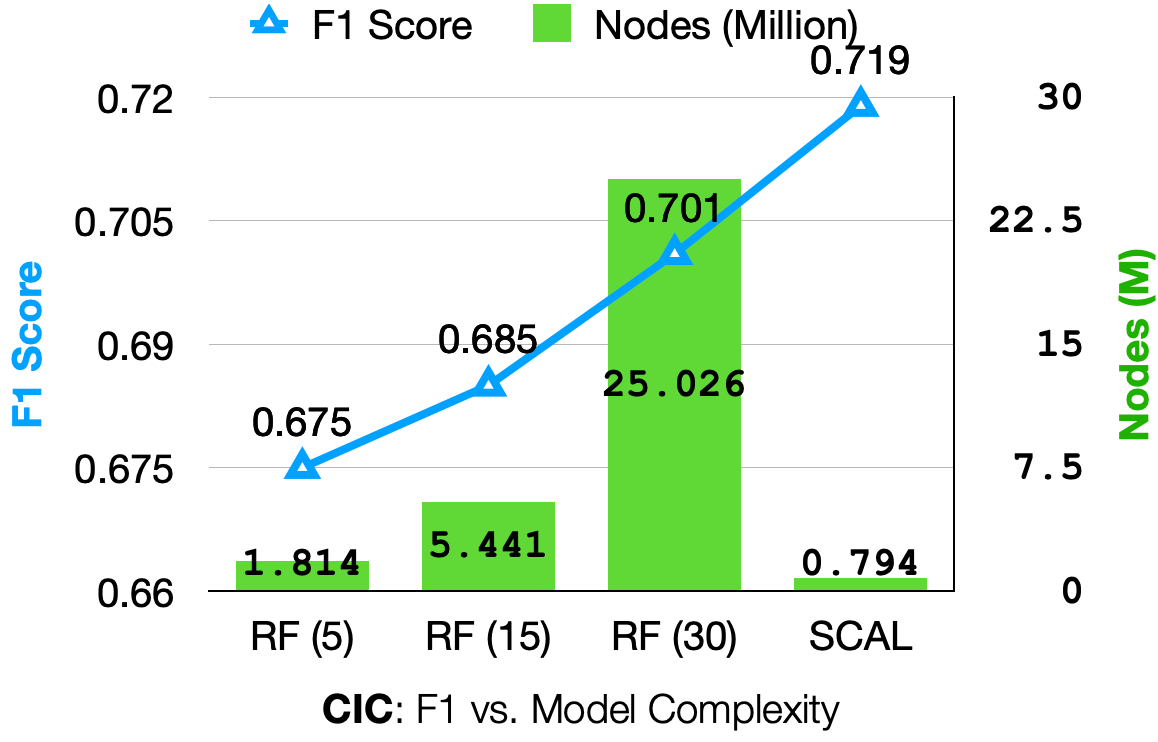}}
    \subcaptionbox{\label{fig:iot-comp} IoT F1/Complexity}{
        \includegraphics[clip,trim=0.2cm 0cm 0.2cm 0cm, width=0.2325\textwidth]{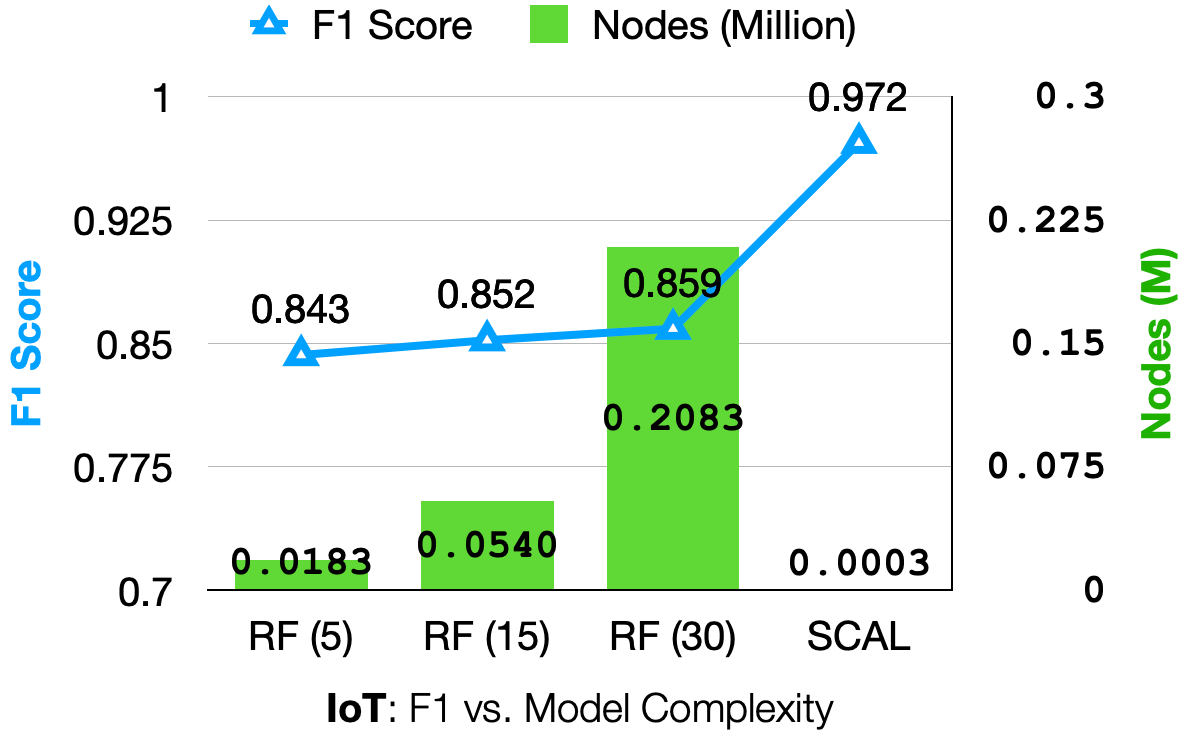}}
    \subcaptionbox{\label{fig:iot-comp} VNAT F1/Complexity}{
        \includegraphics[clip,trim=0.2cm 0cm 0.2cm 0cm, width=0.2325\textwidth]{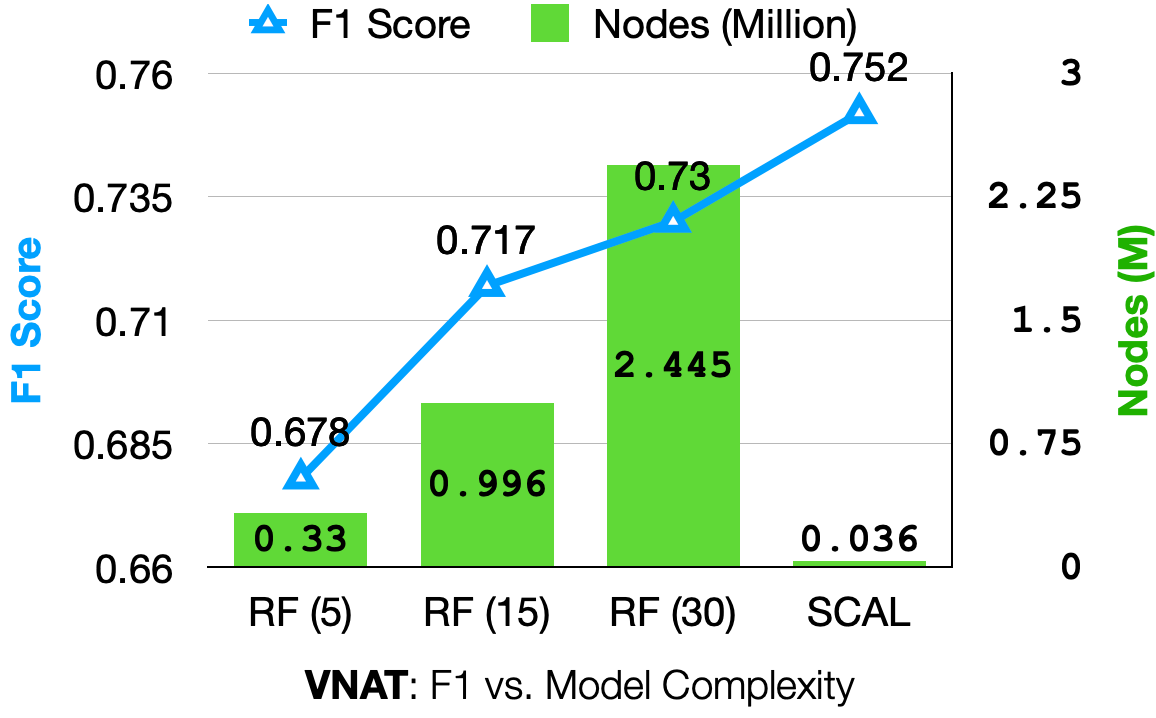}}
    \caption{\label{fig:comp} The impact of the number of estimators in the global random forest classifier on performance and model complexity.}
\end{figure}

\begin{figure*}[!htb]
    \centering
    \includegraphics[clip,trim=0cm 0cm 0cm 0cm, width=0.75\textwidth]{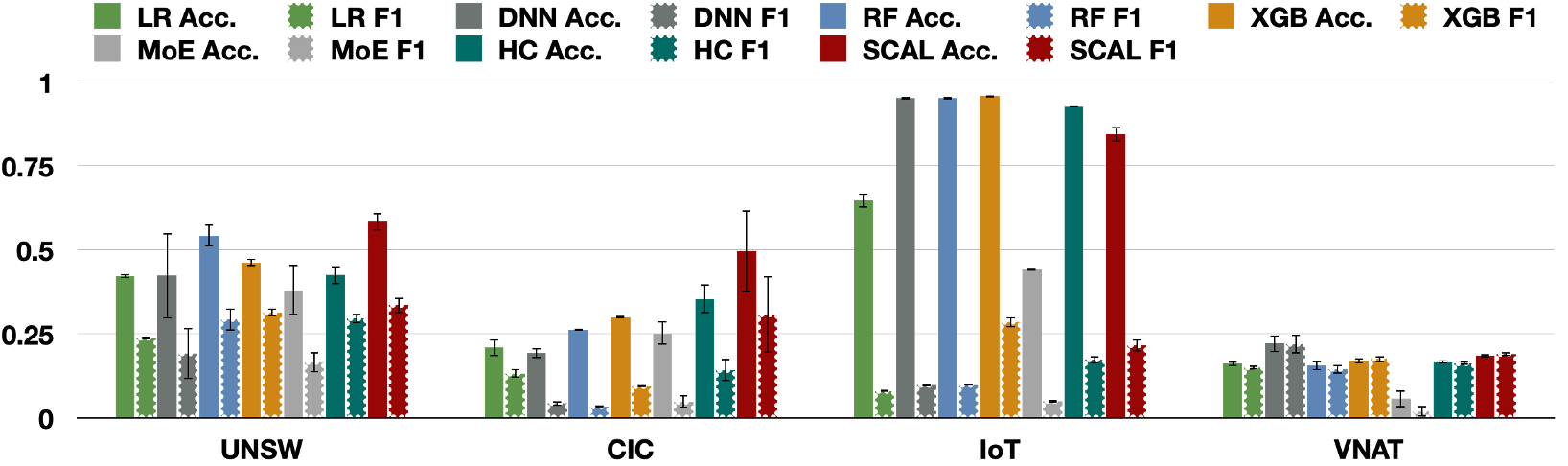}
    \caption{\label{fig:attack_eps12} Accuracy and F1 scores after adversarial attacks with $\epsilon = 1.2$ against  Logistic Regression (LR in green), Deep Neural Network (DNN in grey), Random Forest (RF in blue), XGBoost (XGB in orange), Mixture of Experts (MoE in light grey),  Binary Hierarchical Classification (HC in teal green), and SCAL (SCAL in red) on the four datasets.}
\end{figure*}

We also varied the number of estimators in the global random forest to examine the accuracy-complexity trade-off. As shown in Figure~\ref{fig:comp}, increasing the number of trees in random forest improved F1 scores but grew model complexity by two to five times. In contrast, SCAL achieved higher F1 scores with a fraction of the model complexity, demonstrating that increasing global model complexity cannot match the predictive performance of correlation-aware local models.

\subsection{Robustness} 
We evaluate adversarial robustness by applying the Cube attack~\cite{andriushchenko2019provably}---an efficient black-box $L_{\infty}$ attack that perturbs random feature subsets. The attack was selected for its effectiveness against gradient-free models such as random forests and decision trees~\cite{10.1007s10208-015-9296-2}. Attacks against SCAL were conducted at two levels: first targeting the instance distributor to misdirect instances to the wrong subproblem, and if unsuccessful, targeting the local classifiers directly. This constitutes a grey-box attack requiring partial knowledge of the cascade structure.


Figure~\ref{fig:attack_eps12} shows results under mild attacks ($\epsilon=1.2$). Key observations are:
\begin{itemize}
\itemsep=0pt
\item F1 scores dropped significantly for all classifiers across all datasets.
\item On the IoT dataset, five classifiers retained high accuracy by defaulting to the majority \textit{Benign} class.
\item SCAL was comparable to or more resilient than all other classifiers, despite facing stronger grey-box attacks at both the instance distributor and local classifier levels.
\end{itemize}

Under a stronger attack budget ($\epsilon=2.2$), performance deteriorated sharply for all classifiers, yet SCAL remained the most or equally resilient across all datasets (detailed results omitted due to page limitations). 

In summary, SCAL demonstrated consistently superior adversarial robustness across all attack budgets, even under the more challenging grey-box setting.

\subsection{Explainability}

Decision trees and rules are widely recognized as interpretable models~\cite{rudin2019explaining, osti_10350681}. While local classifiers in SCAL are simple decision trees, their complexity can still exceed human comprehension, often containing hundreds of thousands of nodes. In this section, we explore pruning local classifiers using GENESIM~\cite{vandewiele2016genesim}, a multi-objective optimization algorithm that aggressively reduces model complexity while preserving accuracy.

Figure~\ref{fig:explain} shows model size (log number of nodes) and predictive performance before and after pruning. Results are as follows:

\begin{itemize}
\itemsep=0pt
\item \textbf{UNSW}: Pruning reduced tree size fourfold with no accuracy loss; F1 decreased slightly as the algorithm optimizes for accuracy.
\item \textbf{CIC}: Tree size was reduced by up to 4,500 times to just 177 nodes with no accuracy loss, though F1 dropped due to aggressive pruning of minority classes.
\item \textbf{IoT}: The trained tree was already compact ($\sim$250 nodes), so pruning was not applied.
\item \textbf{VNAT}: Tree size was reduced 293 times to 122 nodes with no loss in accuracy or F1.
\end{itemize}

Pruning merged all local trees into a single compact model of just a few dozen nodes, substantially improving interpretability. Notably, aggressive pruning of global random forest classifiers typically causes significant accuracy loss~\cite{vandewiele2016genesim}, whereas pruned local classifiers in SCAL retain predictive performance.

\begin{figure}[!htb]
    \centering
        \includegraphics[clip,trim=0cm 0cm 0cm 0cm, width=0.49\textwidth]{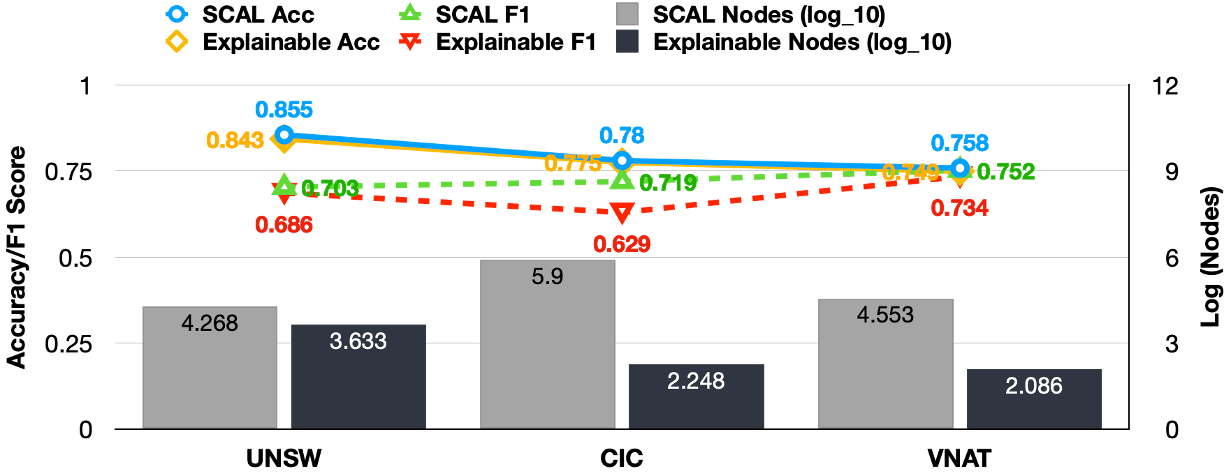}
        \caption{\label{fig:explain} Explainability of global classifier vs local classifier}
\end{figure}


\section{Conclusions and Limitations}
\label{sec:conclude}

We presented SCAL, a divide-and-conquer technique that automatically partitions a complex learning problem into simpler subproblems, and conducted a holistic evaluation across accuracy, cost, robustness, and explainability. Experiments on real-world network intrusion detection datasets confirm that local models combined outperform a single global model on all four dimensions: they are more accurate on minority classes, orders of magnitude smaller, more robust to adversarial attacks, and more interpretable.

\subsubsection*{Limitations}
The current fingerprint generation is designed for network payload data; extending it to standard feature inputs requires further investigation. Robustness was evaluated using a single black-box attack effective against all models; future work should employ model-specific attacks and explore varying attack strengths during retraining and evaluation. Explainability was limited to decision trees; future studies should incorporate model-agnostic methods such as LIME~\cite{10114529396722939778} and SHAP~\cite{10555532952223295230}. Despite these limitations, SCAL demonstrates strong potential for addressing all four machine learning concerns in resource-constrained, adversarial environments.
  


\bibliography{milcom}
\bibliographystyle{milcom_style/IEEEtran}
\end{document}